\newcommand{\etal}{\emph{et~al.}}
\DeclareMathOperator*{\argmin}{arg\,min}
\newcommand{\E}{\mathbb{E}}
\newcommand{\EE}[1]{\E\!\left[ #1 \right]}
\newcommand{\Event}[1]{\mathbb{I}\left\lbrace #1 \right\rbrace}
\newcommand{\ceiling}[1]{\left\lceil #1 \right\rceil }
\newcommand{\floor}[1]{\left\lfloor #1 \right\rfloor }
\newcommand{\Naturals}{\mathbb{N}}
\newcommand{\Reals}{\mathbb{R}}
\newtheorem{theorem}{Theorem}
\newtheorem{definition}{Definition}
\newtheorem{corollary}[theorem]{Corollary}
\theoremstyle{definition}
\title{Fast Cross-Validation for Incremental Learning}
\author{Pooria Joulani \qquad Andr\'{a}s Gy\"{o}rgy \qquad Csaba Szepesv\'{a}ri\\
Department of Computing Science, University of Alberta\\
Edmonton, AB, Canada \\
\{pooria,gyorgy,szepesva\}@ualberta.ca}
\begin{document}

\maketitle

\allowdisplaybreaks 
\newcommand{\cvAlg}{\textsc{TreeCV}\xspace}
\newcommand{\learningAlg}{\mathcal{L}}
\newcommand{\Call}[2]{\textsc{#1}\left( #2 \right)}
\newcommand{\tCall}[2]{\textsc{#1}( #2 )}
\newcommand{\strt}{s} \newcommand{\nd}{e}
\newcommand{\mdl}{\hat{f}_{\strt..\nd}}
\newcommand{\nomodel}{\emptyset}
\newtheorem{assumption}{Assumption}

\begin{abstract}
  Cross-validation (CV) is one of the main tools for performance
  estimation and parameter tuning in machine learning. 
  The general recipe for computing CV estimate
  is to run a learning algorithm separately 
  for each CV fold, a computationally expensive process. 
  In this paper, we propose a new approach to reduce 
  the computational burden of CV-based performance estimation. 
  As opposed to all previous attempts, which are specific to a particular
  learning model or problem domain, we propose a general method applicable 
  to a large class of incremental learning algorithms, 
  which are uniquely fitted to big data problems.
  In particular, our method applies to a wide
  range of supervised and unsupervised learning tasks with different
  performance criteria, as long as the base learning algorithm is
  incremental. We show that the running time of the algorithm scales
  logarithmically, rather than linearly, in the number of CV
  folds. Furthermore, the algorithm has favorable properties for
  parallel and distributed implementation. Experiments with
  state-of-the-art incremental learning algorithms confirm the
  practicality of the proposed method.
\end{abstract}

\section{Introduction}
\label{sec:intro}
Estimating generalization performance is a core task in machine learning.
Often, such an estimate is computed using 
$k$-fold cross-validation ($k$-CV): the dataset is
partitioned into $k$ subsets of approximately equal size, 
and each subset is used to evaluate a model trained on the $k-1$ other subsets
to produce a numerical score;
the $k$-CV performance estimate is then obtained as the average of the obtained
scores. 

A significant drawback of $k$-CV is its heavy computational cost.
The standard method for computing a $k$-CV estimate is to train $k$
separate models independently, one for each fold, requiring (roughly) 
$k$-times the work of training a single model. 
The extra computational cost imposed by $k$-CV is especially high for 
leave-one-out CV (LOOCV), a popular variant, 
where the number of folds equals the number of samples in the dataset.
The increased computational requirements may become a major problem, especially when CV is used for tuning
hyper-parameters of learning algorithms in a grid search, in which
case one $k$-CV session needs to be run for every combination of
hyper-parameters, dramatically increasing the computational cost even when the number of
hyper parameters is small.\footnote{ For example, the semi-supervised
  anomaly detection method of
  G\"{o}rnitz \etal\ \shortcite{gornitz2013supervisedanomalydetectionb} has four
  hyper-parameters to tune. Thus, testing all possible combinations for, e.g., 10 possible values of each hyper-parameter requires running CV 10000 times.}

To avoid the added cost, much previous research went into studying the
efficient calculation of the CV estimate  (exact or approximate).
However, previous work has been concerned with special models and
problems: With the exception of Izbicki \shortcite{Izb13}, these methods are typically limited
to linear prediction with the squared loss
and to kernel methods with various loss functions, including twice-differentiable
losses and the hinge loss (see Section~\ref{sec:related-work} for details). %
In these works, the training time of the underlying learning algorithm
is $\Theta(n^3)$, where $n$ is the size of the dataset, and the
main result states that the CV-estimate (including LOOCV estimates) is yet computable 
in $O(n^3)$ time. 
Finally, Izbicki \shortcite{Izb13} gives a very efficient
solution (with $O(n+k)$ computational complexity) for the restrictive case when
two models trained on any two datasets can be combined, in constant
time, into a single model that is trained on the union of the datasets.

Although these results are appealing,
they are limited to methods and problems with specific features.
In particular, they are unsuitable for big data problems where 
the only practical methods are incremental and run in  
linear, or even sub-linear time
\cite{shalev-shwartz2011pegasosprimalestimated,ClHaWo12}.
In this paper, we show that CV calculation can be done efficiently for incremental learning algorithms.
In Section~\ref{sec:tree-cv}, we present a method that, under mild,
natural conditions, speeds up the calculation of the $k$-CV estimate
for incremental learning algorithms, in the general learning setting
explained in Section~\ref{sec:problem} (covering a wide range of supervised and unsupervised learning problems), and for arbitrary performance
measures. The proposed method, \cvAlg, exploits
the fact that incremental learning algorithms do not need to be fed with the whole dataset at once,
but instead learn from whatever data they are provided with and later
update their models when more data arrives, without the need to be
trained on the whole dataset from scratch. As we will show in Section~\ref{sec:analysis}, \cvAlg computes a guaranteed-precision approximation of the CV estimate when the algorithms produce \emph{stable} models.
We present several implementation details and analyze the time and space complexity of \cvAlg in
Section~\ref{sec:implementation}. In particular, we show that its computation time is only $O(\log k)$-times bigger than the time required to train a single model, which is a major
improvement compared to the $k$-times increase required for a naive computation of the CV estimate.
Finally, Section~\ref{sec:experiments}  presents experimental results, which confirm the efficiency of the proposed algorithm.

\subsection{Related Work}
\label{sec:related-work}
Various methods, often specialized to specific learning settings, have
been proposed to speed up the computation of the $k$-CV estimate. Most
frequently, efficient $k$-CV computation methods are specialized to
the regularized least-squares (RLS) learning settings (with squared-RKHS-norm regularization).
In particular, the generalized cross-validation method
\cite{golub1979generalizedcrossvalidation,wahba1990splinemodelsobservational}
computes the LOOCV estimate in $O(n^2)$ time for a dataset of size
$n$ from the solution of the RLS problem over the whole dataset;
this is generalized to $k$-CV calculation in $O(n^3/k)$ time by
Pahikkala \etal\ \shortcite{pahikkala2006fastnfolda}.  In the special case of least-squares
support vector machines (LSSVMs), Cawley~\shortcite{cawley2006leaveoneout} shows
that LOOCV can be computed in $O(n)$ time using a Cholesky
factorization (again, after obtaining the solution of the RLS
problem).
It should be noted that all of the aforementioned methods
use the inverse (or some factorization) of a special matrix (called
the \emph{influence matrix}) in their calculation; the aforementioned
running times are therefore based on the assumption that this inverse
is available (usually as a by-product of solving the RLS problem, computed in
$\Omega(n^3)$ time).\footnote{In the absence of this assumption,
  stochastic trace estimators \cite{girard1989fastmontecarlo} or
  numerical approximation techniques
  \cite{golub1997generalizedcrossvalidation,nguyen2001efficientgeneralizedcross} are used to avoid the costly
  inversion of the matrix.}  %

A related idea for approximating the LOOCV estimate is using the
notion of \emph{influence
  functions}, %
which measure the effect of adding an infinitesimal single point of
probability mass to a distribution. Using this notion,
Debruyne \etal\ \shortcite{debruyne2008modelselectionkernel} propose to approximate the
LOOCV estimate for kernel-based regression algorithms that use any
twice-differentiable loss function.
Liu \etal\ \shortcite{liu2014efficientapproximationcross}
use \emph{Bouligand influence functions}
\cite{christmann2008bouligandderivativesrobustness}, a generalized
notion of influence functions for arbitrary distributions%
, in order to calculate the $k$-CV estimate for kernel methods and 
twice-differentiable loss functions. %
Again, these methods need an existing model trained on the whole dataset, and require $\Omega(n^3)$ running time.

A notable exception to the square-loss/differentiable loss requirement is the work of
Cauwenberghs and Poggio \shortcite{cauwenberghs2001incrementaldecrementalsupport}. They propose an
incremental training method for support-vector classification (with
the hinge loss), and show how to revert the incremental algorithm to
``unlearn'' data points and obtain the LOOCV estimate. The LOOCV
estimate is obtained in time similar to that of a single training by
the same incremental algorithm, which is $\Omega(n^3)$ in the worst
case.

Closest to our approach is the recent work of Izbicki \shortcite{Izb13}: assuming
that two models trained on any two separate datasets can be combined, in
constant time, to a single model that is exactly the same as if the
model was trained on the union of the
datasets, Izbicki \shortcite{Izb13} can compute the $k$-CV estimate in $O(n+k)$
time. However his assumption is very restrictive and applies only to
simple methods, such as Bayesian classification.\footnote{
The other methods considered by Izbicki \shortcite{Izb13} do not satisfy the
theoretical assumptions of that paper.}
In contrast, roughly, we only assume that
a model can be updated efficiently with new data (as opposed to
combining the existing model and a model trained on the new data in constant time), and
we only require that models trained with permutations of the data be
sufficiently similar, not exactly the same.

Note that the CV estimate depends on the specific partitioning of the
data on which it is calculated. To reduce the variance due to
different partitionings, the $k$-CV score can be averaged over
multiple random partitionings. For LSSVMs,
An \etal\ \shortcite{an2007fastcrossvalidation} propose a method to efficiently
compute the CV score for multiple partitionings, resulting in a total
running time of $O(L(n-b)^3)$, where $L$ is the number of different
partitionings and $b$ is the number of data points in each test set.
In the case when all possible partitionings of the dataset are used,
the complete CV (CCV) score is obtained.
Mullin and Sukthankar \shortcite{mullin2000completecrossvalidation} study efficient computation
of CCV for nearest-neighbor-based methods; their method runs in time
$O(n^2k+n^2\log(n))$.

\section{Problem Definition}
\label{sec:problem}
We consider a general setting that encompasses a wide
range of supervised and unsupervised learning scenarios (see
Table~\ref{tab:special-settings} for a few examples). In this
setting, we are given a dataset $\{z_1, z_2, \ldots, z_n\}$,\footnote{
  Formally, we assume that this is a multi-set, so there might be
  multiple copies of the same data point.  } where each \emph{data
  point} $z_i = (x_i, y_i)$ consists of an \emph{input}
$x_i \in \mathcal{X}$ and an \emph{outcome} $y_i \in \mathcal{Y}$, for
some given sets $\mathcal{X}$ and $\mathcal{Y}$. For example, we might
have $\mathcal{X} \subset \Reals^d, d \ge 1$, with
$\mathcal{Y}=\{+1,-1\}$ in binary classification and
$\mathcal{Y} \subset \Reals$ in regression; for unsupervised learning,
$\mathcal{Y}$ is a singleton:
$\mathcal{Y}=\{{\textsc{NoLabel}}\}$.  We define a \emph{model} as
a function\footnote{Without loss of generality, we only consider
  deterministic models: we may embed any randomness
  required to make a prediction into the value of $x$, so that $f(x)$ is a deterministic
  mapping from $\mathcal{X}$ to $\mathcal{P}$.}
$f:\mathcal{X} \to \mathcal{P}$ that, given an input
$x \in \mathcal{X}$, makes a \emph{prediction}, $f(x) \in \mathcal{P}$,
where $\mathcal{P}$ is a given set (for example,
$\mathcal{P}=\{+1, -1\}$ in binary classification: the model predicts
which class the given input belongs to). Note that the prediction set
need not be the same as the outcome set, particularly for unsupervised
learning tasks. The quality of
a prediction is assessed by a \emph{performance measure} (or \emph{loss function})
$\ell: \mathcal{P} \times \mathcal{X} \times \mathcal{Y} \to
\Reals$
that assigns a scalar value $\ell(p, x, y)$ to the prediction $p$ for
the pair $(x, y)$; for example, $\ell(p, x, y) = \Event{p \neq y}$ for
the prediction error (misclassification rate) in binary classification
(where $\Event{E}$ denotes the indicator function of an event $E$).
\begin{table}[t]
  \centering
  \begin{adjustbox}{width=0.48\textwidth}
  \begin{tabular}[t]{cccccc}
    \hline
    Setting & $\mathcal{X}$ & $\mathcal{Y}$ & $\mathcal{P}$ & $\ell(f(x), x, y)$
    \\ \hline
    \multirow{2}{*}{Classification} & \multirow{2}{*}{$\Reals^d$} & \multirow{2}{*}{$\{+1, -1\}$} & \multirow{2}{*}{$\{+1, -1\}$} & \multirow{2}{*}{$\Event{f(x) \neq y}$} \\
    &&&& \\
    Regression & $\Reals^d$ & $\Reals$ & $\Reals$ & $(f(x)-y)^2$ \\
    \multirow{2}{*}{$K$-means clustering} & \multirow{2}{*}{$\Reals^d$} & \multirow{2}{*}{$\{{\textsc{NoLabel}}\}$} & \multirow{2}{*}{$\{c_1,c_2,\ldots,c_K\}\subset \Reals^d$} & \multirow{2}{*}{$\| x - f(x)\|^2$ } \\
&&&&
    \\ 
    \multirow{2}{*}{\shortstack{Density estimation}} & \multirow{2}{*}{$\Reals^d$} & \multirow{2}{*}{$\{\textsc{NoLabel}\}$} & \multirow{2}{*}{$\{f: f \textnormal{ is a density}\}$} & \multirow{2}{*}{$- \log(f(x))$} \\
            &&&&
    \\ 
\hline
  \end{tabular}
  \end{adjustbox}
  \caption{Instances of the general learning problem considered in the paper. In $K$-means clustering, $c_j$ denotes the center of the $j$th cluster.}
  \label{tab:special-settings}
\end{table}

Next, we define the notion of an \emph{incremental learning
  algorithm}. Informally, an incremental learning algorithm is a
procedure that, given a model learned from previous data points and a
new dataset, updates the model to accommodate the new dataset
at the fraction of the cost of training the model on the whole data
from scratch.
Formally, let
$\mathcal{M} \subseteq \left\lbrace f:\mathcal{X} \to \mathcal{P}
\right\rbrace$
be a set of models, %
and define $\mathcal{Z}^{*}$
to be the set of all possible datasets of all possible sizes. 
Disregarding computation for now, an incremental learning algorithm is a mapping
$\learningAlg: \left( \mathcal{M} \cup \{\nomodel\} \right) \times
\mathcal{Z}^{*} \to \mathcal{M}$
that, given a model $f$ from $\mathcal{M}$ (or $\nomodel$ when a model
does not exist yet) and a dataset $Z' = (z'_1, z'_2, \ldots, z'_m)$,
returns an ``updated'' model $f' = \learningAlg(f,Z')$.
To capture often needed  internal states 
(e.g., to store learning rates), we allow the ``padding'' of
the models in $\mathcal{M}$ with extra information as necessary,
while still viewing the models as $\mathcal{X} \to \mathcal{P}$ maps when convenient.
Above, $f$
is usually the result of a previous invocation of $\learningAlg$ on another
dataset $Z \in \mathcal{Z}^{*}$. In particular,
$\learningAlg(\nomodel, Z)$ learns a model from scratch using the dataset $Z$. An important class of incremental algorithms are \emph{online} algorithms, which update the model one data point at a time: to update $f$ with $Z'$, these algorithms make $m$ consecutive calls to $\learningAlg$, where each call updates the latest model with the next remaining data point according to a random ordering of the points in $Z'$.

In the rest of this paper, we consider an incremental learning
algorithm $\learningAlg$, and a fixed, given partitioning of the dataset $\{z_1, z_2, \ldots, z_n\}$ into $k$ subsets (``chunks'')
$Z_1, Z_2, \ldots, Z_k$. We use $f_i = \learningAlg(\nomodel, Z \setminus Z_i)$ to denote the model learned from
all the chunks except $Z_i$.  Thus, the $k$-CV estimate of the
generalization performance of $\learningAlg$, denoted
$R_{k\textnormal{-CV}}$, is given by
\begin{align*}
  R_{k\textnormal{-CV}} = \dfrac{1}{k} \sum_{i=1}^{k} R_i,
\end{align*}
where
$R_i = \frac{1}{|Z_i|} \sum_{(x,y) \in Z_i} \ell ( f_i(x), x, y),
i=1,2,\ldots, k,$
is the performance of the model
$f_i$ evaluated on $Z_i$.
The LOOCV estimate $R_{n\textnormal{-CV}}$ is obtained when $k=n$.

\section{Recursive Cross-Validation}
\label{sec:tree-cv}
Our algorithm builds on the observation that for every $i$ and $j$,
$1 \le i < j \le k$, the training sets $Z\setminus Z_i$ and
$Z\setminus Z_j$ are almost identical, except for the two chunks $Z_i$
and $Z_j$ that are held out for testing from one set but not the
other. The naive $k$-CV calculation method ignores this fact,
potentially wasting computational resources.  
When using an incremental learning algorithm, we may be able to exploit this
redundancy: we can first learn a model only from the examples shared
between the two training sets, and then ``increment'' the differences
into two different copies of the model learned. 
When the extra cost of saving and restoring a model required by this approach
is comparable to learning a model from scratch, then this approach may 
result in a considerable speedup.

\begin{algorithm}[t!]
  \caption{$\cvAlg \left( \strt, \nd, \mdl \right)$}
  \label{alg:tree-cv}
  \begin{algorithmic}
    \STATE \textbf{input:} indices $\strt$ and $\nd$, and the model
    $\mdl$ trained so far. %
    \IF{$\nd = \strt$} %
    \STATE
    $\hat{R}_{\strt} \gets \frac{1}{|Z_{\strt}|} \sum_{(x,y) \in
      Z_{\strt}} \ell \left( \mdl(x), x, y \right).$ %
    \STATE \textbf{return} $\frac{1}{k}\hat{R}_{\strt}$. %
    \ELSE %
    \STATE Let $m \gets \floor{\frac{\strt+\nd}{2}}$. %
    \STATE Update the model with the chunks $Z_{m+1}, \ldots, Z_{\nd}$
    to get
    $\hat{f}_{\strt..m} = \learningAlg(\mdl, Z_{m+1}, \ldots,
    Z_{\nd})$.%
    \STATE Let $r \gets \Call{\cvAlg}{\strt, m,
      \hat{f}_{\strt..m}}$. %
    \STATE Update the model with the chunks $Z_{\strt}, \ldots, Z_m$ to
    get
    $\hat{f}_{m+1..\nd} = \learningAlg(\mdl,Z_{\strt}, \ldots, Z_m )$.%
    \STATE Let
    $r \gets r + \Call{\cvAlg}{m+1, \nd, \hat{f}_{m+1..\nd}}$. %
    \STATE \textbf{return} $r$. %
    \ENDIF %
  \end{algorithmic}
\end{algorithm}

To exploit the aforementioned redundancy in training all $k$ models at
the same time, we organize the $k$-CV computation process in a tree
structure. The resulting recursive procedure,
$\tCall{\cvAlg}{\strt, \nd, \mdl}$, shown in Algorithm~\ref{alg:tree-cv},
receives two indices $\strt$ and $\nd$, $1 \le \strt \le \nd \le k$, and a
model $\mdl$ that is trained on all chunks except
$Z_{\strt}, Z_{\strt+1}, \ldots, Z_{\nd}$, and returns $(1/k)\sum_{i=\strt}^\nd \hat{R}_i$, the normalized sum of the
performance scores $\hat{R}_i, i = \strt,\ldots,\nd,$ corresponding to
testing $\hat{f}_{i..i}$, the model trained on $Z\setminus Z_i$, on the chunk $Z_i$, for $i=\strt,\ldots,\nd$.
\cvAlg divides the hold-out chunks into two groups
$Z_s, Z_{s+1}, \ldots, Z_m$ and $Z_{m+1}, \ldots Z_\nd$, where
$m=\floor{\frac{\strt+\nd}{2}}$ is the mid-point, and obtains the
test performance scores for the two groups separately by
recursively calling itself.  More precisely, \cvAlg first
updates the model by training it on the second group of chunks,
$Z_{m+1}, \ldots, Z_{\nd}$, resulting in the model
$\hat{f}_{\strt..m}$, and makes a recursive call
$\tCall{\cvAlg}{\strt, m, \hat{f}_{\strt..m}}$ to get $(1/k)\sum_{i=\strt}^m \hat{R}_i$.
Then, it repeats the same procedure for the other group of chunks: starting from the
original model $\hat{f}_{\strt..\nd}$ it had received, it updates
the model, this time using the first group of the remaining chunks,
$Z_{\strt}, \ldots, Z_m$, that were previously held out, and calls
$\tCall{\cvAlg}{ m+1, \nd, \hat{f}_{m+1..\nd}}$ to get $(1/k)\sum_{i=m+1}^\nd \hat{R}_i$
(for the second group of chunks). The recursion stops when there is only one hold-out chunk
($s=e$), in which case the performance score $\hat{R}_s$ of the model
$\hat{f}_{s..s}$ (which is now trained on all the chunks except for $Z_s$)
is directly calculated and returned. Calling $\tCall{\cvAlg}{1,n,\nomodel}$ calculates $\hat{R}_{k\textnormal{-CV}} = \frac{1}{k} \sum_{i=1}^{k} \hat{R}_i$.
Figure~\ref{fig:tree-example}
shows an example of the recursive call tree underlying a 
run of the algorithm calculating the LOOCV estimate on a dataset of four data points. Note that the tree structure imposes a new order of feeding the chunks to the learning algorithm, e.g., $z_3$ and $z_4$ are learned before $z_2$ in the first branch of the tree.

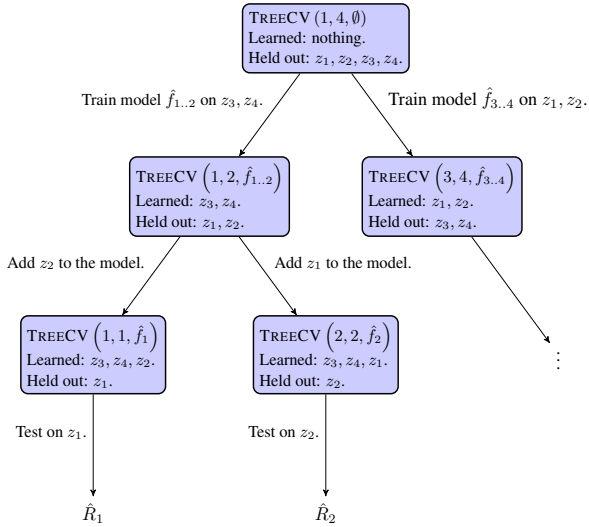
\begin{figure}[t]
  \centering \resizebox{8cm}{7cm}{
    \begin{tikzpicture}[->,>=stealth',level/.style={sibling distance =
        4.5cm, level distance = 3cm, }, auto,]
      \tikzstyle{trnd} = [ rectangle, draw, thick, fill=blue!20, text
      centered, rounded corners, execute at begin
      node={\begin{varwidth}{10.5em}}, execute at end
        node={\end{varwidth}}, ] \node at (0,0) [trnd] { { \small
          $\Call{\cvAlg}{1,4,\nomodel}$\\
          Learned: nothing.\\
          Held out: $z_1, z_2, z_3, z_4$.  } } child{ node [trnd] { {
            \small
            $\Call{\cvAlg}{1,2,\hat{f}_{1..2}}$\\
            Learned: $z_3, z_4$.\\
            Held out: $z_1 ,z_2$.  } } child{ node [trnd] { { \small
              $\Call{\cvAlg}{1,1,\hat{f}_{1}}$\\
              Learned: $z_3, z_4, z_2$.\\
              Held out: $z_1$.  } } child{ node [] {$\hat{R}_1$} edge
            from parent node[above left] { \small Test on $z_1$.  }}
          edge from parent node[above left] { \small Add $z_2$ to the
            model.  } } child{ node [trnd] { { \small
              $\Call{\cvAlg}{2,2,\hat{f}_{2}}$\\
              Learned: $z_3, z_4, z_1$.\\
              Held out: $z_2$.  } } child{ node [] {$\hat{R}_2$} edge
            from parent node[above left] { \small Test on $z_2$.  }}
          edge from parent node[above right] { \small Add $z_1$ to the
            model.  } } edge from parent node[above left] { \small
          Train model $\hat{f}_{1..2}$ on $z_3, z_4$.  } } child{ node
        [trnd] { \small
          $\Call{\cvAlg}{3,4,\hat{f}_{3..4}}$\\
          Learned: $z_1, z_2$.\\
          Held out: $z_3, z_4$.  } child[missing]{} child{ node []
          {$\vdots$} } edge from parent node[above right] { Train
          model $\hat{f}_{3..4}$ on $z_1,z_2$.  } };
    \end{tikzpicture}
  }
  \caption{An example run of \cvAlg on a dataset
    of size four, calculating the LOOCV estimate. 
  }
  \label{fig:tree-example}
\end{figure}

\subsection{Accuracy of \cvAlg}
\label{sec:analysis}
To simplify the analysis, in this section and the next, we assume that each chunk is of the same size, that is $n=k b$ for some integer $b \ge 1$.

Note that the models $\hat{f}_{s..s}$ used in computing $\hat{R}_s$ are
learned incrementally. If the
learning algorithm learns the same model no matter whether it is given
the chunks all at once or gradually, then $\hat{f}_{s..s}$ is the same as the model
$f_s$ used in the definition of $R_{k\textnormal{-CV}}$, and
$R_{k\textnormal{-CV}} = \hat{R}_{k\textnormal{-CV}}$.
If this
assumption does not hold, then $\hat{R}_{k\textnormal{-CV}}$ is still close
to $R_{k\textnormal{-CV}}$ as long as the models $\hat{f}_{s..s}$ are
sufficiently similar to their corresponding models $f_s$. In the rest of this
section, we formalize this assertion.

\newcommand{\fb}{f^{\textnormal{batch}}} 
\newcommand{\finc}{f^{\textnormal{inc}}} 
\newcommand{\Ztr}{Z^{\textnormal{train}}}
\newcommand{\Zte}{Z^{\textnormal{test}}}
\newcommand{\Rte}{R^{\textnormal{test}}}
First, we define the notion of stability for an incremental learning
algorithm. Intuitively, an incremental learning algorithm is stable if
the performance of the models are nearly the same no matter whether
they are learned incrementally or in batch. 
Formally, suppose that a dataset $\{z_1,\ldots,z_n\}$
is partitioned into $l+1$ nonempty chunks $\Zte$ and $\Ztr_1,\ldots, \Ztr_l$, and we are using
$\Zte$ as the test data and the chunks $\Ztr_1, \ldots,\Ztr_l$ as the training data.
Let $\fb=\learningAlg(\nomodel,\Ztr_1 \cup \ldots \cup \Ztr_l)$ denote the model learned
from the training data when provided all at the
same time, and let
\begin{align*}
  \finc=\learningAlg\Bigg(\learningAlg\bigg(\ldots \Big(\learningAlg(\nomodel, \Ztr_1), \Ztr_2
  \Big), \ldots, \Ztr_{l-1} \bigg), \Ztr_l \Bigg)
\end{align*}
denote the model learned from the same chunks when they are provided incrementally
to $\learningAlg$. Let
$\Rte(f) = \frac{1}{|\Zte|} \sum_{(x,y)\in \Zte}
\ell\left(f(x),x,y\right)$
denote the performance of a model $f$ on the test data
$\Zte$. %
\begin{definition}[Incremental stability]
  The algorithm $\learningAlg$ is $g$-incrementally stable for a function
  $g:\Naturals \times \Naturals \to \Reals$
  if, for any dataset $\{z_1, z_2, \ldots, z_n\}$, $b<n$, and partition $\Zte,\Ztr_1,\ldots, \Ztr_l$ with nonempty cells $\Ztr_i, 1\le i \le l$ and $|\Zte|=b$, the test performance of the models $\fb$ and $\finc$ defined above satisfy
  \begin{align*}
    \left| \Rte(\finc) - \Rte(\fb) \right| \le g\left(n-b, b\right).
  \end{align*}
If the data $\{z_1,\ldots,z_n\}$ is drawn independently from the same distribution $\mathcal{D}$ over $\mathcal{X}\times\mathcal{Y}$ and/or the learning algoritm $\learningAlg$ is randomized, we say that $\learningAlg$ is $g$-incrementally stable in expectation if
  \begin{align*}
    \left| \EE{\Rte(\finc)} - \EE{\Rte(\fb)} \right| \le g\left(n-b, b\right)
  \end{align*}
for all partitions selected independently of the data and the randomization of $\learningAlg$.
\end{definition}

The following statement is an immediate consequence of the above definition:
\begin{theorem}
  \label{thm:computes-cv}
  Suppose $n=bk$ for some integer $b \ge 1$  and that algorithm $\learningAlg$ is $g$-incrementally stable.
  Then,
  \begin{align*}
    \left| \hat{R}_{k\textnormal{-CV}} - R_{k\textnormal{-CV}} \right| \le g\left(n-b, b\right).
  \end{align*}
  If $\learningAlg$ is $g$-incrementally stable in expectation then
  \begin{align*}
    \left| \EE{\hat{R}_{k\textnormal{-CV}}} - \EE{R_{k\textnormal{-CV}}} \right| \le g\left(n', b\right).
  \end{align*}
\end{theorem}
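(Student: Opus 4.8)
The plan is to reduce the global bound to a per-fold comparison and then average over the $k$ folds. Both quantities are fold-averages, $R_{k\textnormal{-CV}} = \frac{1}{k}\sum_{i=1}^k R_i$ and $\hat{R}_{k\textnormal{-CV}} = \frac{1}{k}\sum_{i=1}^k \hat{R}_i$, where $R_i = \Rte(f_i)$ evaluates the batch model $f_i = \learningAlg(\nomodel, Z\setminus Z_i)$ on $Z_i$ and $\hat{R}_i = \Rte(\hat{f}_{i..i})$ evaluates the incrementally trained leaf model $\hat{f}_{i..i}$ on $Z_i$, with $\Zte = Z_i$ of size $b$. If for each $i$ I can show $|\hat{R}_i - R_i| \le g(n-b,b)$ by instantiating $g$-incremental stability with $\fb = f_i$ and $\finc = \hat{f}_{i..i}$, then the triangle inequality finishes the job: $|\hat{R}_{k\textnormal{-CV}} - R_{k\textnormal{-CV}}| \le \frac{1}{k}\sum_{i=1}^k |\hat{R}_i - R_i| \le g(n-b,b)$, with no loss in the constant since every fold yields the same bound.

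The one step that needs real care --- and the main obstacle --- is certifying that each leaf model $\hat{f}_{i..i}$ genuinely has the form of the incremental model $\finc$ in the stability definition. First I would prove, by induction on the recursion, the invariant that every call $\tCall{\cvAlg}{\strt,\nd,\mdl}$ receives a model $\mdl$ trained incrementally on exactly $Z \setminus (Z_{\strt}\cup\cdots\cup Z_{\nd})$: the root call $\tCall{\cvAlg}{1,k,\nomodel}$ satisfies this vacuously, and each of the two updates before the recursive calls adds precisely the complementary half of the current held-out block ($Z_{m+1},\ldots,Z_{\nd}$ for the left child, $Z_{\strt},\ldots,Z_m$ for the right), so the invariant propagates. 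Specializing to a leaf $\strt=\nd=i$ gives that $\hat{f}_{i..i}$ is trained on $Z\setminus Z_i$, fed as the sequence of chunk-blocks met along the root-to-leaf path. I then check this sequence is an admissible partition $\Ztr_1,\ldots,\Ztr_l$ into nonempty cells: each block is a union of consecutive chunks, and it is nonempty because $m=\floor{\frac{\strt+\nd}{2}}$ satisfies $\strt \le m < \nd$ whenever $\strt<\nd$, so neither half of a split is ever empty. Hence $\hat{f}_{i..i}=\finc$, $f_i=\fb$, and the per-fold bound applies.

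For the in-expectation statement the argument is identical after taking expectations: linearity gives $\EE{\hat{R}_{k\textnormal{-CV}}} - \EE{R_{k\textnormal{-CV}}} = \frac{1}{k}\sum_{i=1}^k(\EE{\hat{R}_i} - \EE{R_i})$, and $g$-incremental stability in expectation bounds each term by $g(n-b,b)$, provided the chunk partition and the tree-induced feeding order are fixed independently of the data and of the internal randomization of $\learningAlg$. This proviso holds here because the partition $Z_1,\ldots,Z_k$ is given in advance and the feeding order is a deterministic function of it, so the required independence is automatic. (I read the $g(n',b)$ in the statement as $g(n-b,b)$.)
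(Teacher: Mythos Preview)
Your proposal is correct and follows essentially the same route as the paper: fix a fold $i$, identify $\hat{f}_{i..i}$ with $\finc$ and $f_i$ with $\fb$ using the blocks of chunks added along the root-to-leaf path as the partition $\Ztr_1,\ldots,\Ztr_l$, apply $g$-incremental stability to get $|\hat{R}_i - R_i|\le g(n-b,b)$, and average. Your inductive invariant and nonemptiness check make explicit what the paper leaves as ``by definition,'' but the argument is the same.
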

\begin{proof}
We prove the first statement only, the proof of the second part is essentially identical.
  Recall that $Z_j, j=1,2,\dots, k$ denote the chunks used for cross-validation. Fix $i$ and let $l=\ceiling{\log k}$. Let $\Zte=Z_i$ and $\Ztr_j, j=1 \ldots l$,  denote the union of the chunks used for training at depth $j$ of the recursion branch ending with the computation of
  $\hat{R}_i$. Then, by definition, $\hat{R}_i = \Rte(\finc)$ and $R_i = \Rte(\fb)$. Therefore, 
  $| \hat{R}_i - R_i | \le g\left(n-b, b\right)$, and the statement follows since $\hat{R}_{k\textnormal{-CV}}$ and $R_{k\textnormal{-CV}}$ are defined as the averages of
the $\hat{R}_i$ and $R_i$, respectively.
\end{proof}

It is then easy to see that incremental learning methods with a bound on their excess risk are incrementally stable in expectation.

\begin{theorem}
\label{thm:excess-risk-stable}
Suppose the data $\{z_1,\ldots,z_n\}$ is drawn independently from the same distribution $\mathcal{D}$ over $\mathcal{X}\times\mathcal{Y}$. Let $(X,Y) \in \mathcal{X}\times\mathcal{Y}$ be drawn from $\mathcal{D}$ independently of the data and let $f^* \in\argmin_{f \in \mathcal{M}} \EE{\ell(f(X), X, Y)}$ denote a model in $\mathcal{M}$ with minimum expected loss. Assume there exist upper bounds $m^{\textnormal{batch}}(n-b)$ and $m^{\textnormal{inc}}(n-b)$ on the excess risks of $\fb$ and $\finc$, trained on $n'=n-b$ data points, such that
\[
\EE{ \ell(\fb(X), X,Y) - \ell( f^*(X), X, Y) } \le m^{\textnormal{batch}}(n')
\]
and
\[
\EE{\ell( \finc(X), X, Y ) - \ell ( f^*(X), X, Y ) } \le m^{\textnormal{inc}}(n')
\]
for all $n$ and for every partitioning of the dataset that is independent of the data, $(X,Y)$, and the randomization of $\learningAlg$. Then $\learningAlg$ is incrementally stable in expectation w.r.t. the loss function $\ell$, with $g(n',b) = \max\{m^{\textnormal{batch}}(n'),m^{\textnormal{inc}}(n')\}$.
\end{theorem}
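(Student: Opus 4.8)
The plan is to reduce the claimed bound on the difference of \emph{expected test performances} to the assumed bounds on the \emph{population excess risks}, and then to exploit the fact that $f^*$ is a minimizer over $\mathcal{M}$ to turn what would be a sum of two error terms into their maximum. The single identity that does all the work is
\[
\EE{\Rte(f)} = \EE{\ell(f(X), X, Y)}
\]
for any model $f$ produced by $\learningAlg$ from the training cells. Once this is in hand, both $\EE{\Rte(\fb)}$ and $\EE{\Rte(\finc)}$ become population risks, and the excess-risk hypotheses apply directly.

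First I would establish this identity. Since the partition $\Zte, \Ztr_1, \ldots, \Ztr_l$ is chosen independently of the data and the sample is i.i.d.\ from $\mathcal{D}$, conditioning on the partition leaves the points of $\Zte$ distributed as i.i.d.\ draws from $\mathcal{D}$ and, crucially, \emph{independent} of the points in the training cells. Because $\fb$ and $\finc$ are functions of the training cells alone (together with the internal randomization of $\learningAlg$), each is independent of every test point $(x, y) \in \Zte$. Hence $\EE{\ell(f(x), x, y)} = \EE{\ell(f(X), X, Y)}$ for each test point, where $(X, Y) \sim \mathcal{D}$ is the fresh sample appearing in the excess-risk bounds; averaging the $|\Zte| = b$ identically distributed test terms yields the displayed identity.

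With the identity applied to both models, the target quantity becomes $\EE{\ell(\finc(X), X, Y)} - \EE{\ell(\fb(X), X, Y)}$. I would then add and subtract $\EE{\ell(f^*(X), X, Y)}$ to rewrite this as the difference of the two excess risks $A \define \EE{\ell(\finc(X), X, Y)} - \EE{\ell(f^*(X), X, Y)}$ and $B \define \EE{\ell(\fb(X), X, Y)} - \EE{\ell(f^*(X), X, Y)}$. Since $f^* \in \argmin_{f \in \mathcal{M}} \EE{\ell(f(X), X, Y)}$ and $\fb, \finc \in \mathcal{M}$, both $A$ and $B$ are nonnegative, and by assumption $A \le m^{\textnormal{inc}}(n')$ and $B \le m^{\textnormal{batch}}(n')$ with $n' = n - b$. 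Thus $A - B$ lies in $[-B, A] \subseteq [-m^{\textnormal{batch}}(n'), m^{\textnormal{inc}}(n')]$, so $|A - B| \le \max\{m^{\textnormal{batch}}(n'), m^{\textnormal{inc}}(n')\} = g(n', b)$, which is exactly the stability-in-expectation bound.

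The only genuinely delicate step is the first one: justifying $\EE{\Rte(f)} = \EE{\ell(f(X), X, Y)}$, i.e.\ that the in-sample test average is an unbiased estimate of the population risk of the \emph{trained} model. This is precisely where the hypotheses that the data are i.i.d.\ and that the partition is chosen independently of the data enter, and it is the assumption I would invoke most explicitly. The remaining algebra is routine; the one point worth flagging is that collapsing $A$ and $B$ into a \emph{maximum} (rather than a sum, which the triangle inequality alone would give) relies essentially on the sign-definiteness of the excess risks, hence on $f^*$ being a true minimizer over $\mathcal{M}$.
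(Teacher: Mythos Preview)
Your proof is correct and follows essentially the same approach as the paper: establish $\EE{\Rte(f)} = \EE{\ell(f(X),X,Y)}$ via independence of the training and test cells, add and subtract the optimal risk, and use the nonnegativity of the excess risks (optimality of $f^*$) to bound each direction separately, yielding the maximum rather than the sum. The paper presents these steps more tersely, but the argument is the same.
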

\begin{proof}
Since the data points in the sets $\Ztr_1,\ldots,\Ztr_l$ and $\Zte$ are independent, $\fb$ and $\finc$ are both independent of $Z^{\textnormal{test}}$. Hence,
$\EE{\Rte(\fb)}=\EE{ \ell \left( \fb(X), X, Y \right) }$ and $\EE{\Rte(\finc)} = \EE{\ell \left( f^{\textnormal{inc}}_n(X), X, Y \right)}$. Therefore,
\begin{align*}
\lefteqn{\EE{\Rte(\finc)}-\EE{\Rte(\fb)} } \\
&= \EE{\Rte(\finc)}-\EE{\ell(f^*(X), X, Y)} \\
&\quad+\EE{\ell(f^*(X), X, Y)} - \EE{\Rte(\fb)} \\
&\le 
\EE{\Rte(\finc)}-\EE{\ell(f^*(X), X, Y)} 
\le m^{\textnormal{inc}}(n')
\end{align*}
where we used the optimality of $f^*$. Similarly, $\EE{\Rte(\fb)}-\EE{\Rte(\finc)} \le m^{\textnormal{batch}}(n')$, finishing the proof.
\end{proof}

In particular, for online learning algorithms satisfying some regret bound, standard online-to-batch conversion results \cite{CBCoGe04,KaTe09} yield excess-risk bounds for independent and identically distributed data. Similarly, excess-risk bounds are often available for stochastic gradient descent (SGD) algorithms which scan the data once (see, e.g., \cite{nemirovski2009robust}). For online learning algorithms (including single-pass SGD), the batch version is usually defined by running the algorithm using a random ordering of the data points or sampling from the data points with replacement. Typically, this version also satisfies the same excess-risk bounds. Thus, the previous theorem shows that these algorithms are
are incrementally stable with $g(n,b)$ being their excess-risk bound for $n$ samples.

Note that this incremental stability is w.r.t. the loss function whose excess-risk is bounded. For example, after visiting $n$ data points, the regret of PEGASOS~\cite{shalev-shwartz2011pegasosprimalestimated} with bounded features is bounded by $O(\log(n))$. Using the online-to-batch conversion of Kakade and Tewari \shortcite{KaTe09}, this gives an excess risk bound $m(n) = O(\log(n)/n)$, and hence PEGASOS is stable w.r.t. the regularized hinge loss with $g(n,b) = m(n)=O(\log(n)/n)$. Similarly, SGD over a compact set with bounded features and a bounded convex loss is stable w.r.t. that convex loss with $g(n,b) = O(1/\sqrt{n})$ \cite{nemirovski2009robust}.
Experiments with these algorithms are shown in Section~\ref{sec:experiments}. Finally, we note that algorithms like PEGASOS or SGD could also be used to scan the data multiple times. In such cases, these algorithms would not be useful incremental algorithms, as it is not clear how one should add a new data point without a major retraining over the previous points. Currently, our method does not apply to such cases in a straightforward way.

\section{Complexity Analysis}
\label{sec:implementation}
In this section, we analyze the running time and storage requirements of \cvAlg, and discuss some practical issues concerning its implementation, including parallelization.

\subsection{Memory Requirements}

\if0
First, consider the organization of
storage. 
Without loss of generality, we assume that a model $f$ can be stored 
as a vector of (real) weights that are required for computing
$f(x)$ for every input $x \in \mathcal{X}$. Also, often the learning algorithm
$\learningAlg$ has some ``state'', that is, some set of variables used in computing and updating $f$. In what follows, we will refer to all of these variables as the \emph{model state}.
\fi

Efficient storage of and updates to the model are crucial for the
efficiency of Algorithm~\ref{alg:tree-cv}: Indeed, in any call of $\tCall{\cvAlg}{\strt, \nd, \mdl}$ that does not correspond to simply evaluating a model on a chunk of data (i.e., $\strt\neq\nd$), \cvAlg has to update the original model $\mdl$ twice, once with $Z_\strt,\ldots,Z_m$, and once with $Z_{m+1},\ldots,Z_\nd$. 
To do this, $\cvAlg$ can either store $\mdl$, or revert to $\mdl$ from $\hat{f}_{\strt..m}$.  
In general, for any type of
model, if the model for $\mdl$ is modified in-place, then we
need to create a copy of it before it is updated to the model 
for $\hat{f}_{s..m}$, or, alternatively, keep track of the changes
made to the model during the update. Whether to use the copying or the save/revert strategy depends on the
application and the learning algorithm. For example, if the model
state is compact, copying is a useful strategy, whereas when
the model undergoes few changes during an update, save/revert might
be preferred. 

Compared to a single run of the learning algorithm $\learningAlg$,
\cvAlg requires some extra storage for saving and
restoring the models it trains along the way. When no
parallelization is used in implementing \cvAlg,
we are in exactly one branch at every point during the execution
of the algorithm.  Since the largest height of a recursion branch is
of $O(\log k)$, and one model (or the changes made to it) is saved in
each level of the branch, the total storage required by \cvAlg
is $O(\log(k))$-times the storage needed for a single model.

\cvAlg can be easily parallelized by dedicating one thread of computation to each of the data groups used in updating $\mdl$ in one call of $\cvAlg$.
In this case one typically needs to copy the model since the
two threads are needed to be able to run independently of each other; thus, the total number of models \cvAlg
needs to store is $O(k)$, since there are $2k-1$ total
nodes in the recursive call tree, with exactly one model stored per
node. Note that a standard parallelized CV calculation also needs to
store $O(k)$ models.

Finally, note that \cvAlg is potentially useful in distributed environment, where each chunk of the data is stored on a different node in the network. Updating the model on a given chunk can then be relegated to that computing node (the model is sent to the processing node, trained and sent back, i.e., this is not using all the nodes at once), and it is
only the model (or the updates made to the model), not the data, that
needs to be communicated to the other nodes.  Since at every level of the tree, each chunk is added to exactly one model, the total communication cost of doing this is $O(k \log(k) )$.

\subsubsection{Running Time}
\label{sec:runtime}

Next, we analyze the time complexity of $\cvAlg$ when
calculating the $k$-CV score for a dataset of size $n$
under our previous simplifying assumption that $n=bk$ for some integer $b \ge 1$.

The running time of \cvAlg is analyzed in terms of
the running time of the learning algorithm $\learningAlg$ and the time
it takes to copy the models (or to save and then revert the
changes made to it while it is being updated by
$\learningAlg$). Throughout this subsection, we use the following
definitions and notations: for $m= 0,1,\ldots, n$, $l = 1,\ldots,n-m$,
and $j=1,\ldots,k$,
\begin{itemize}
\item $t_u(m,l) \ge 0$ denotes the time required to update a model,
  already trained on $m$ data points, with a set of $l$ additional
  data points;
\item $t_s(m,l) \ge 0$ is the time required to copy the model,
  (or save and revert the changes made to it) when the model is already
  trained on $m$ data points and is being updated with $l$ more data
  points;
\item $t(j)$ is the time spent in saving, restoring, and
  updating models in a call to
  $\Call{\cvAlg}{\strt, \nd, \mdl}$ with $j = \nd - \strt + 1$
  hold-out chunks (and with $\mdl$ trained on $k-j$ chunks);
\item $t_{\ell}$ denotes the time required to test a model on one of
  the $k$ chunks (where the model is trained on the other $k-1$
  chunks);
\item $T(j)$ denotes the \emph{total} running time of $\tCall{\cvAlg}{\strt, \nd, \mdl}$ when the number of  chunks held out is $j = \nd - \strt + 1$, and $\mdl$ is already trained with $n-bj$ data points. Note that $T(k)$ is the total running time of \cvAlg to calculate the $k$-CV score for a dataset of size $n$.
\end{itemize}

By definition, for all $j=2 \ldots k$, we have
\begin{align*}
  t(j) &=  t_u(n-bj, b\floor{j/2}) +  t_s(n-bj, b\floor{j/2})  \\
               &\quad + t_u(n-bj, b\ceiling{j/2}) +  t_s(n-bj, b\ceiling{j/2}) + t_c,
\end{align*}
where $t_c \ge 0$ accounts for the cost of the operations other than the recursive function calls.

We will analyze the running time of \cvAlg under the following natural assumptions: First, we assume that
$\learningAlg$ is not slower if data points are provided in batch rather than one by one. That is, 
\begin{align}
  t_u(m, l) \le \sum_{i=m}^{m+l-1} t_u(i,1), \label{eq:admissible-incremental}
\end{align}
for all $m =0,\ldots,n$ and $l = 1,\ldots,n-m$.%
\footnote{If this is not the case, we would always input the data one by one even
  if there are more data points available.}  
Second, we assume that updating a model requires work comparable to 
saving it or reverting the changes made to it during the update.
This is a natural assumption since the update procedure is also writing those changes. Formally, we assume that
there is a constant $c \ge 0$ (typically $c<1$) such that for all $m=0,\ldots,n$ and
$l=1,\ldots,n-m,$
\begin{align}
  t_s(m,l) \le c \ t_u(m,l). \label{eq:admissible-saving}
\end{align}

To get a quick estimate of the running time, assume for a moment the idealized case that $k=2^d$, $t_u(m,l)=l t_u(0,1)$ for all $m$ and $l$, and $t_c=0$. Since $n2^{-j}$ data points are added to the models of a node at level $j$ in the recursive call tree, the work required in such a node is $(1+c) n 2^{-j} t_u(0,1)$. There are $2^j$ such nodes, hence the cumulative running time at level $j$ nodes is $(1+c) n t_u(0,1)$, hence the total running time of the algorithm is $(1+c) n t_u(0,1) \log_2 k$, where
$\log_2$ denotes base-$2$ logarithm. 

The next theorem establishes a similar logarithmic penalty (compared to the
running time of feeding the algorithm with one data point at a time)
in the general case.
\begin{theorem}
  \label{thm:general-incremental}
  Assume \eqref{eq:admissible-incremental} and \eqref{eq:admissible-saving}  are satisfied. Then the total running time of \cvAlg can be bounded as
  \begin{align*}
    T(k) \le &\ n (1+c) t_u^* \log_2(2k) + (k-1)  t_c + k t_{\ell},
  \end{align*}
  where $t_u^* = \max_{0 \le i \le n-1} t_u(i,1)$.
\end{theorem}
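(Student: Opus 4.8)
The plan is to unroll the recursion satisfied by $T$ and to account separately for the three kinds of cost that arise across the recursion tree of \cvAlg. A call $\tCall{\cvAlg}{\strt,\nd,\mdl}$ with $j=\nd-\strt+1\ge 2$ held-out chunks performs its own updates and saves/reverts at cost $t(j)$ and then makes two recursive calls; since $m=\floor{(\strt+\nd)/2}$, the first call holds out $\ceiling{j/2}$ chunks and the second $\floor{j/2}$ chunks, whereas a call with $j=1$ only evaluates the model, at cost $t_\ell$. This gives
\begin{align*}
  T(j) = t(j) + T(\ceiling{j/2}) + T(\floor{j/2}) \quad (j\ge 2), \qquad T(1)=t_\ell .
\end{align*}

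Next I would bound the per-node cost. Using \eqref{eq:admissible-saving} to replace each $t_s$ term by $c\,t_u$, then \eqref{eq:admissible-incremental} to bound each update time $t_u(n-bj,\cdot)$ by a sum of single-point update times, and finally bounding every single-point update by $t_u^*$, I obtain
\begin{align*}
  t(j) \le (1+c)\,b\,\big(\floor{j/2}+\ceiling{j/2}\big)\,t_u^* + t_c = (1+c)\,b\,j\,t_u^* + t_c ,
\end{align*}
using $\floor{j/2}+\ceiling{j/2}=j$. This is valid because, as $n=bk\ge bj$, all indices appearing in the single-point-update sums lie in $\{0,\dots,n-1\}$, so $t_u^*$ dominates each of them.

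I would then sum these costs over the recursion tree. Every internal node makes exactly two recursive calls, so the tree is full with exactly $k$ leaves (one per chunk) and hence $k-1$ internal nodes; thus the $t_\ell$ terms contribute $k\,t_\ell$ and the additive $t_c$ terms contribute $(k-1)\,t_c$ in total. The remaining update/save cost is $(1+c)\,b\,t_u^*\sum_{v}j_v$, where $v$ ranges over internal nodes and $j_v$ is the number of chunks held out at $v$. The key step is the identity $\sum_v j_v=\sum_{i=1}^{k} d_i$, where $d_i$ is the depth of the leaf testing $Z_i$: chunk $Z_i$ is held out at precisely the internal nodes on its root-to-leaf path, of which there are $d_i$. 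Since each split is into $\ceiling{j/2}$ and $\floor{j/2}$, the tree is balanced and every leaf satisfies $d_i\le\ceiling{\log_2 k}\le\log_2(2k)$, so $\sum_v j_v\le k\log_2(2k)$. Combining the three contributions and substituting $n=bk$ yields the claimed bound
\begin{align*}
  T(k)\le n(1+c)\,t_u^*\log_2(2k)+(k-1)\,t_c+k\,t_\ell .
\end{align*}

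The only genuinely non-routine ingredient is the bound $\sum_v j_v\le k\log_2(2k)$, and I expect the main obstacle to be making the balanced-depth claim $d_i\le\ceiling{\log_2 k}$ fully rigorous for general (non-power-of-two) $k$. The clean route is to prove by induction that $\max_i d_i=\ceiling{\log_2 k}$ via the identity $\ceiling{\log_2\ceiling{j/2}}=\ceiling{\log_2 j}-1$ for $j\ge 2$, or, equivalently, to solve the recurrence $U(j)=j+U(\ceiling{j/2})+U(\floor{j/2})$ for $\sum_v j_v$ directly by induction.
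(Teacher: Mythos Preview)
Your argument is correct and complete: the per-node bound $t(j)\le(1+c)bj\,t_u^*+t_c$ matches the paper's, the leaf/internal-node counts are right, the double-counting identity $\sum_v j_v=\sum_i d_i$ is valid (since $j_v$ equals the number of leaves below $v$), and the balanced-depth bound $d_i\le\ceiling{\log_2 k}\le\log_2(2k)$ holds via the identity you cite. Substituting $bk=n$ yields exactly the stated bound.

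The paper takes a different route. After deriving the same bound $t(j)\le aj+t_c$ with $a=(1+c)n\,t_u^*/k$, it proves directly by induction on $j$ that
\[
T(j)\le a j\bigl(\log_2(j-1)+1\bigr)+(j-1)t_c+j\,t_\ell,
\]
using the recursion $T(j)=T(\floor{j/2})+T(\ceiling{j/2})+t(j)$ and the step $\log_2(\ceiling{j/2}-1)+2\le\log_2(j-1)+1$; the theorem follows at $j=k$ from $\log_2(k-1)+1\le\log_2(2k)$. Your recursion-tree unrolling is arguably more transparent, making the origin of the $\log_2 k$ factor explicit as the tree height and mirroring the informal ``quick estimate'' the paper gives just before the theorem. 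The paper's induction is more mechanical but avoids any appeal to tree-depth lemmas for non-power-of-two $k$, folding that work into the inductive hypothesis. Both approaches require essentially the same arithmetic fact about halving (your $\ceiling{\log_2\ceiling{j/2}}=\ceiling{\log_2 j}-1$ versus the paper's $\log_2(\ceiling{j/2}-1)+1\le\log_2(j-1)$), so neither is materially shorter.
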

\begin{proof}
By \eqref{eq:admissible-incremental},
$t_u(n-bj, l) \le \sum_{i=0}^{l-1} t_u(n-bj+i, 1) \le l\ t_u^*$
for all $l = 1,\ldots,bj$. Combining with \eqref{eq:admissible-saving}, for any $2 \le j \le k$ we obtain
\begin{align}
    t(j) &\le  \ (1+c) t_u(n-bj, b\floor{j/2})  \nonumber \\
    & \quad + \ (1+c) t_u(n-bj, b\ceiling{j/2}) +t_c\nonumber \\
& \le (1+c)bt_u^*\left(\floor{j/2} + \ceiling{j/2}\right) +t_c \nonumber \\
    &=   \frac{(1+c)n}{k}\ j\ t_u^* + t_c := a j +t_c \label{eq:tjbound}
\end{align}
where $a=(1+c) n t_u^*/k$.
Next we show by induction that for $j\ge 2$ this implies
\begin{align}
\label{eq:ind}
T(j) \le a j (\log_2(j-1)+1) +(j-1) t_c +j t_\ell.
\end{align}
Substituting $j=k$ in \eqref{eq:ind} proves the theorem since $\log_2(j-1)+1 \le \log_2(2j)$.
By the definition of \cvAlg,
  \begin{align*}
    T(j) =
    \begin{cases}
      T\left(\floor{\frac{j}{2}}\right) + T\left(\ceiling{\frac{j}{2}}\right) + t(j), & j \ge 2;\\
      t_{\ell}, & j = 1.
    \end{cases} %
  \end{align*}
This implies that \eqref{eq:ind} holds for $j=2,3$. Assuming \eqref{eq:ind} holds for all $2\le j'<j$, $4 \le j \le k$, 
from~\eqref{eq:tjbound} we get 
 \begin{align*}
    T(j) & =  T(\floor{j/2}) + T(\ceiling{j/2}) + t(j) \\
   & \le a j \left( \log_2(\ceiling{j/2} - 1) + 2 \right) + t_c(j-1) + j t_{\ell} \\
    & \le   a j (\log_2(j-1)+1) + t_c(j-1) +j t_\ell
  \end{align*}
  completing the proof of \eqref{eq:ind}. %
\end{proof}

For fully incremental, linear-time learning algorithms (such as PEGASOS or single-pass SGD), we obtain the following upper bound:
\begin{corollary}
  \label{cor:full-incremental}
  Suppose that the learning algorithm $\learningAlg$ satisfies \eqref{eq:admissible-saving}  and 
  $t_u(0, m) = m t_u^*$ for some $t_u^*>0$ and all $1\le m \le n$.
Then $$T(k) \le (1+c) T_{\learningAlg} \log_2(2k) + t_c (k-1) + k t_\ell,$$ where $T_\learningAlg=t_u(0,n)$ is the running time of a single run of $\learningAlg$.
\end{corollary}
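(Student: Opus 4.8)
The plan is to obtain the corollary as a direct specialization of Theorem~\ref{thm:general-incremental}: I would check that the corollary's hypotheses imply the two assumptions of the theorem, and then simplify the theorem's bound using the linearity assumption. Since \eqref{eq:admissible-saving} is assumed outright in the corollary, the only assumption of the theorem left to verify is \eqref{eq:admissible-incremental}; once that is in place, the bound $T(k) \le n(1+c)t_u^* \log_2(2k) + (k-1)t_c + k t_\ell$ applies verbatim, and the remaining work is to rewrite the leading term.

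Concretely, I would first argue that for a fully incremental, linear-time algorithm the cost of a single-point update is a constant $t_u^*$ independent of the number $i$ of points already processed, i.e. $t_u(i,1) = t_u^*$ for every $i$; this is the intended reading of the assumption $t_u(0,m) = m t_u^*$, namely that the from-scratch cost is linear because each of the $m$ steps costs the same. Under this reading, $t_u(m,l) \le \sum_{i=m}^{m+l-1} t_u(i,1) = l t_u^*$ holds (in fact with equality), so \eqref{eq:admissible-incremental} is satisfied, and moreover the quantity $t_u^* = \max_{0 \le i \le n-1} t_u(i,1)$ appearing in Theorem~\ref{thm:general-incremental} coincides with the constant $t_u^*$ of the corollary. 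Applying the theorem and then substituting $n t_u^* = t_u(0,n) = T_{\learningAlg}$ (the corollary's hypothesis with $m=n$) turns the leading term $n(1+c)t_u^*$ into $(1+c)T_{\learningAlg}$, which is exactly the claimed bound; the lower-order terms $(k-1)t_c$ and $k t_\ell$ carry over unchanged.

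I expect the only delicate point to be the reconciliation of the two occurrences of $t_u^*$: the theorem is phrased in terms of the worst-case single-step cost $\max_{0\le i\le n-1} t_u(i,1)$, whereas the corollary postulates only the from-scratch profile $t_u(0,m) = m t_u^*$, which by itself lower-bounds but does not upper-bound the individual increments $t_u(i,1)$ (combining it with \eqref{eq:admissible-incremental} gives $\sum_{i=0}^{m-1} t_u(i,1) \ge m t_u^*$, hence $t_u(i,1) \ge t_u^*$, not an upper bound). The substantive content of \emph{fully incremental, linear-time} is precisely that the per-step cost does not grow with the amount of data already seen; making this explicit, so that $\max_{0\le i \le n-1} t_u(i,1) = t_u^*$, is what licenses the clean substitution, after which the conclusion is a one-line rewrite of the theorem's bound.
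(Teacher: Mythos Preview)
Your approach is exactly what the paper intends: the corollary is stated without proof, immediately after Theorem~\ref{thm:general-incremental}, as a direct specialization obtained by substituting $n t_u^* = t_u(0,n) = T_{\learningAlg}$ into the theorem's bound. Your observation that the literal hypothesis $t_u(0,m)=m t_u^*$ does not by itself force $\max_i t_u(i,1)=t_u^*$ (nor \eqref{eq:admissible-incremental}) is a genuine nicety that the paper glosses over; the intended reading, as you infer from the surrounding text on ``fully incremental, linear-time'' algorithms and the idealized case $t_u(m,l)=l\,t_u(0,1)$ discussed just before the theorem, is that every single-point update costs the same $t_u^*$, which makes both assumptions of the theorem hold with equality and identifies the two uses of $t_u^*$.
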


\section{Experiments}
\label{sec:experiments}

\newcommand{\iw}{0.30}
\begin{figure*}
\centering
\begin{tabular}{ccc}
\includegraphics[width=\iw\textwidth]{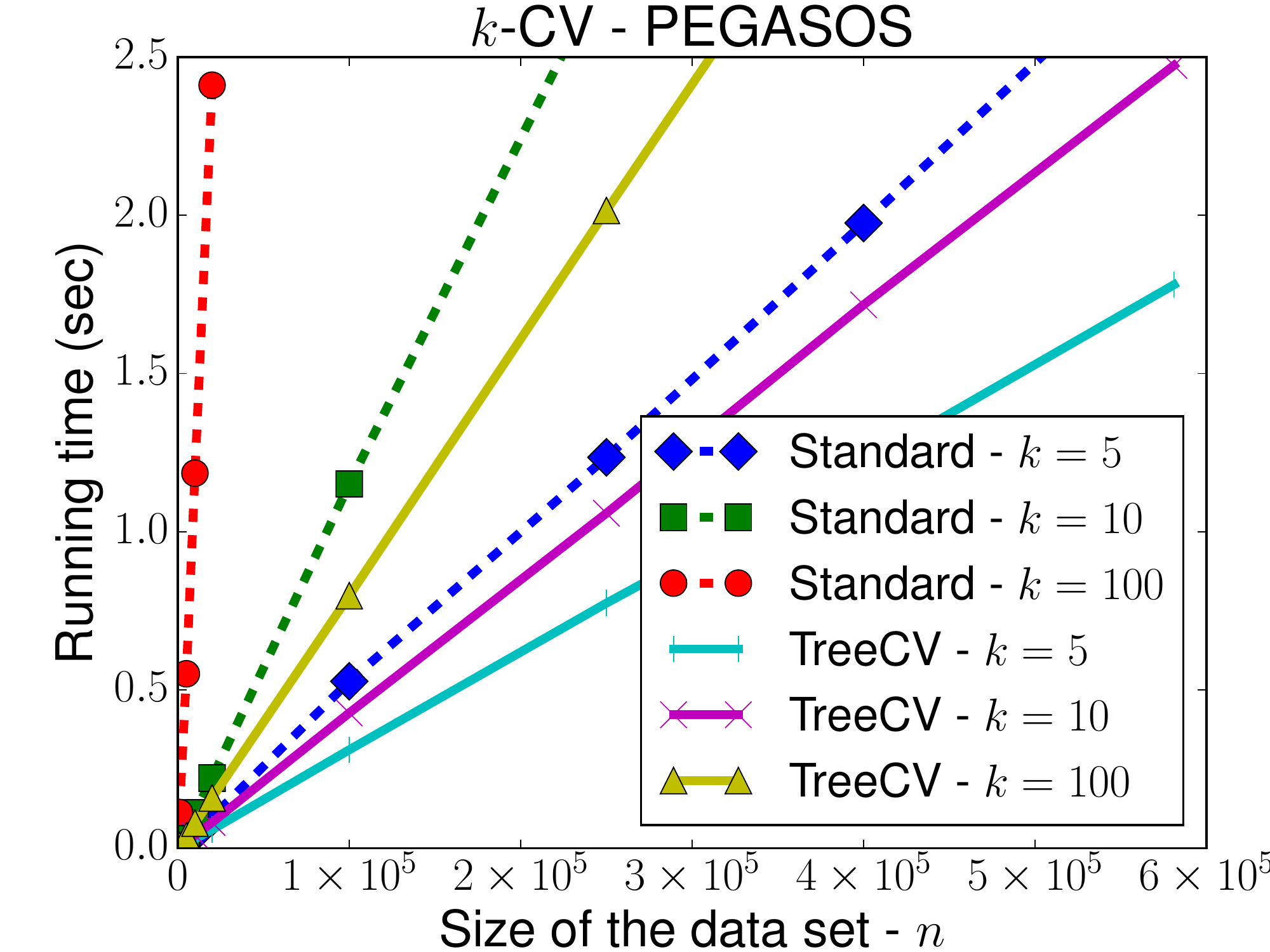}&
\includegraphics[width=\iw\textwidth]{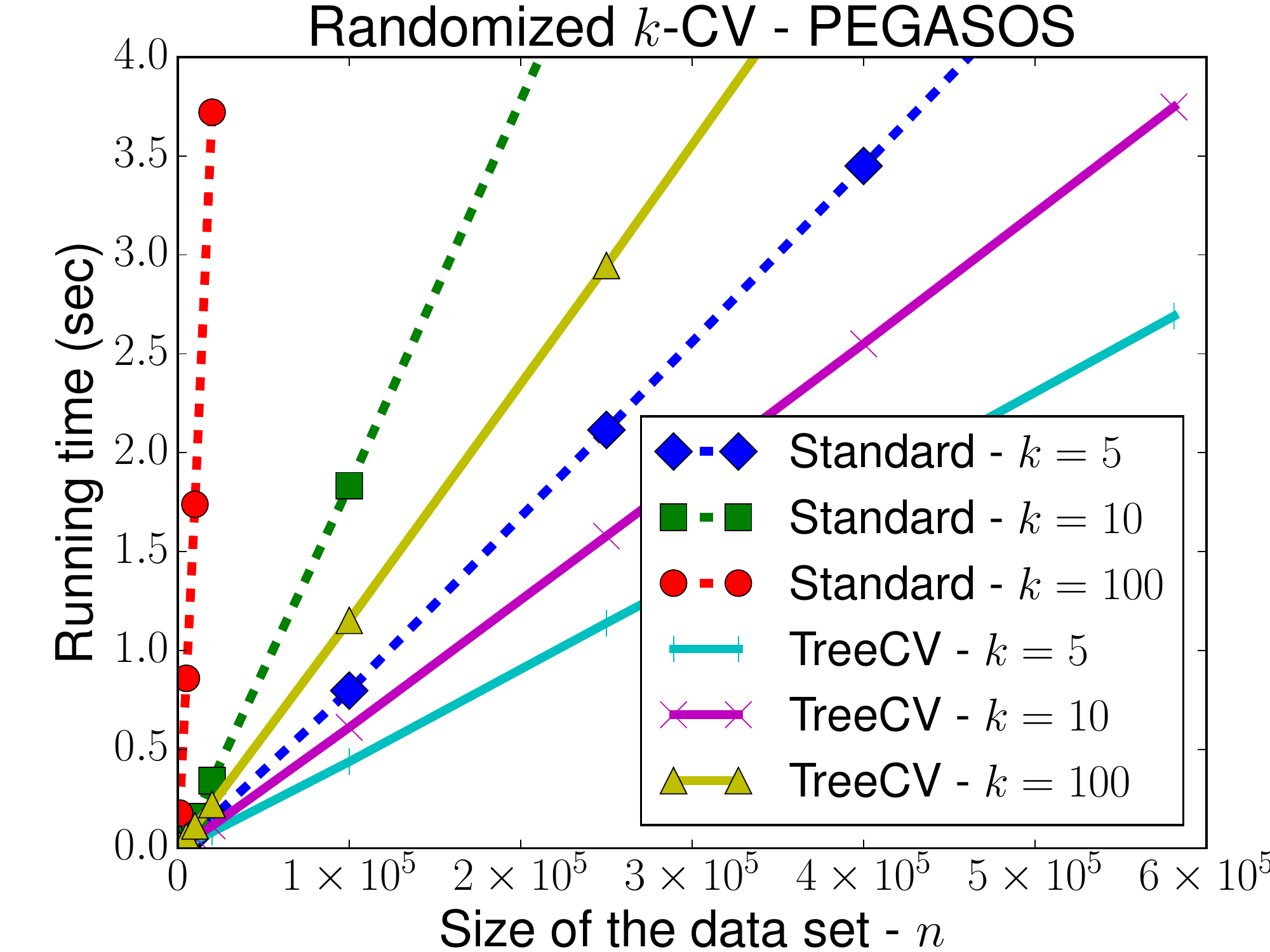}&
\includegraphics[width=\iw\textwidth]{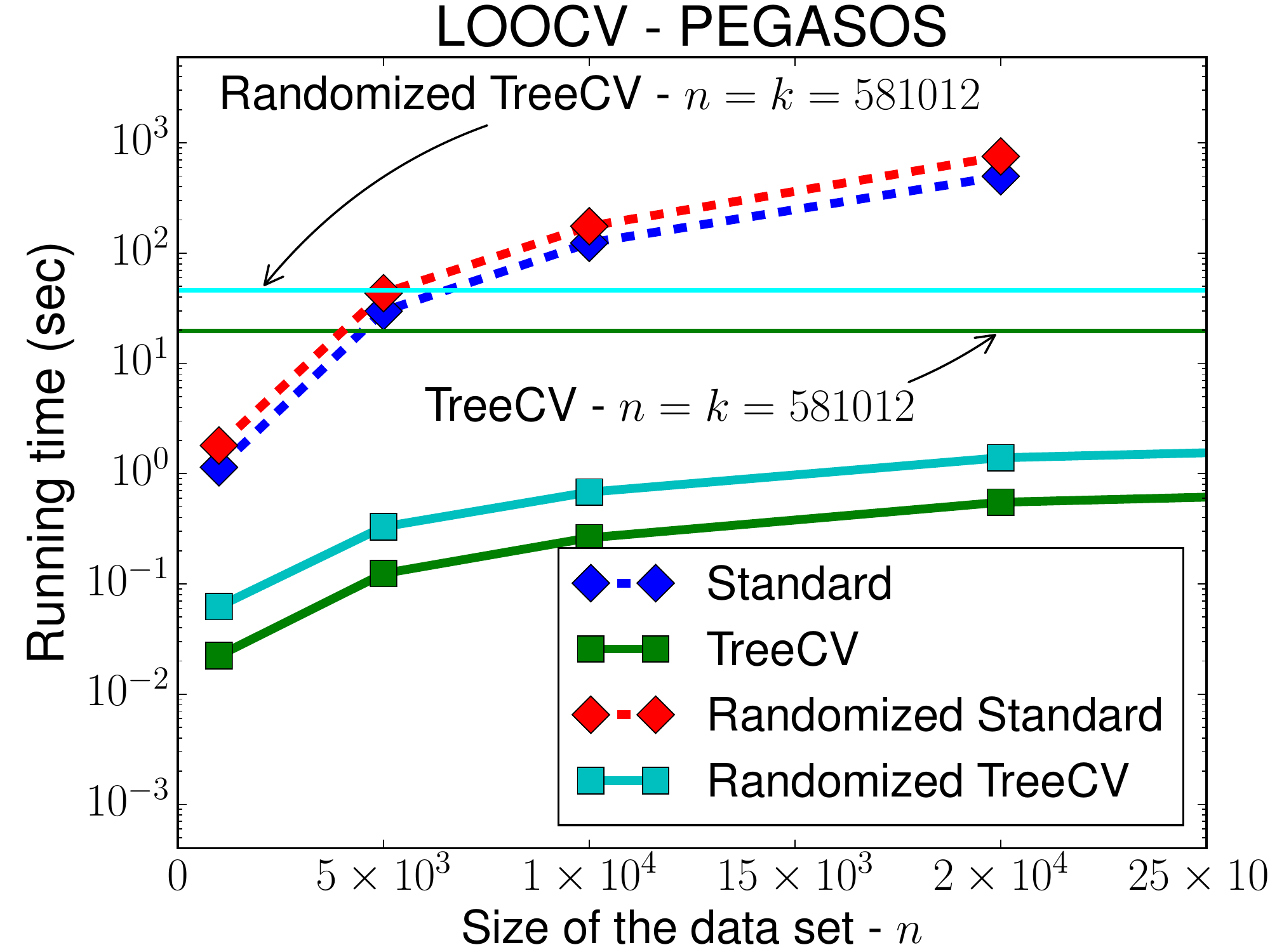}\bigskip\\
\includegraphics[width=\iw\textwidth]{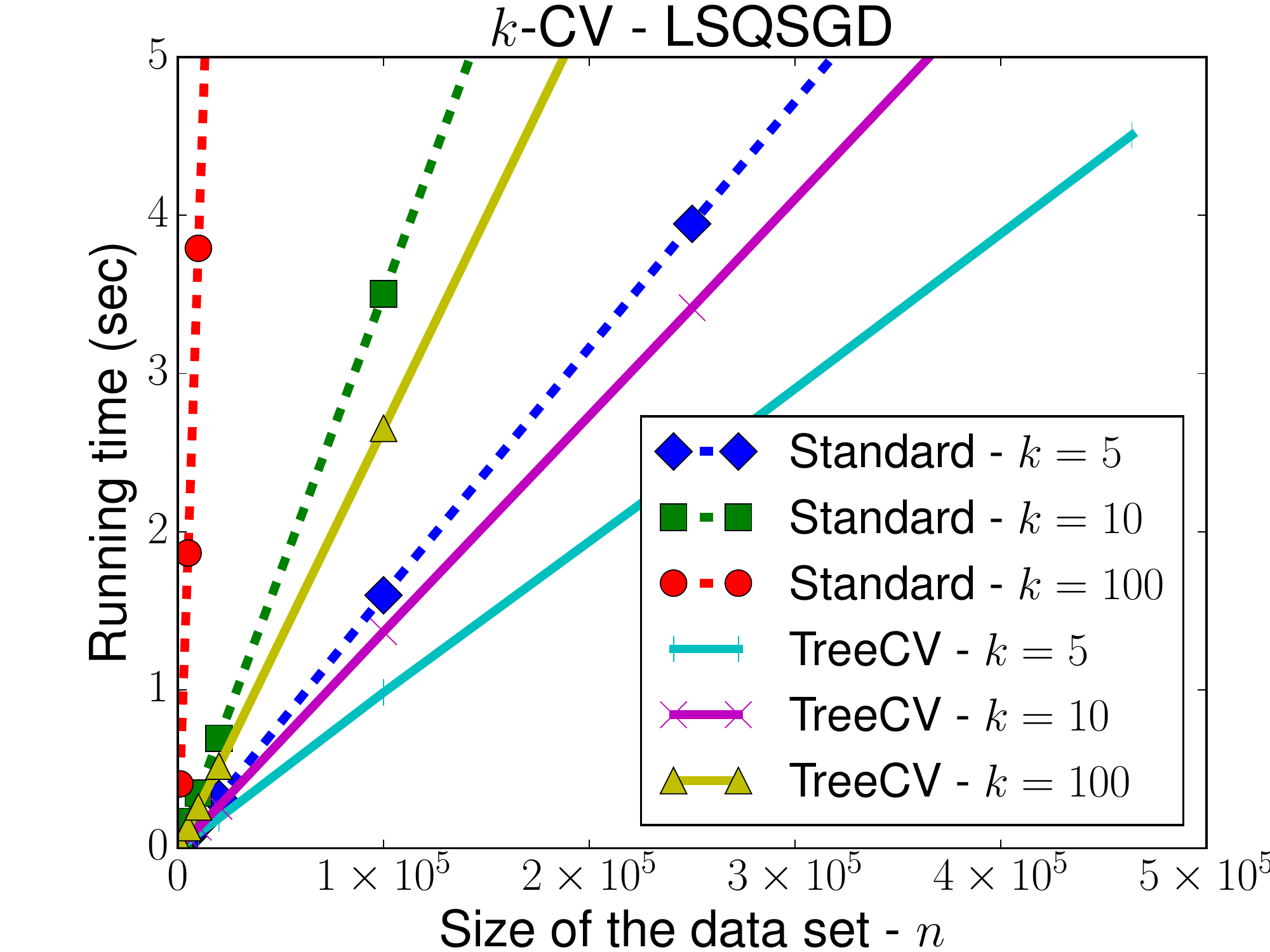}&
\includegraphics[width=\iw\textwidth]{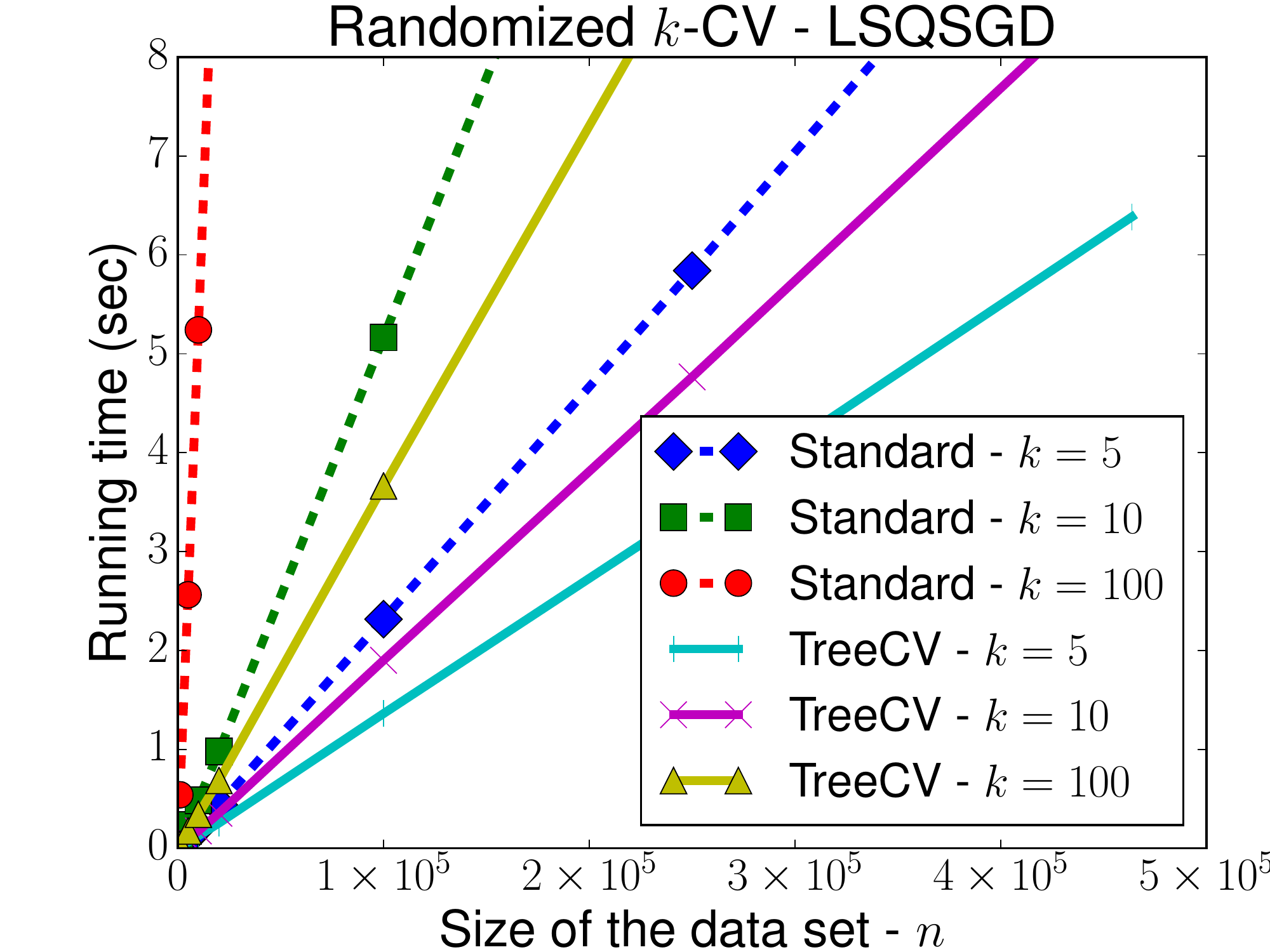}&
\includegraphics[width=\iw\textwidth]{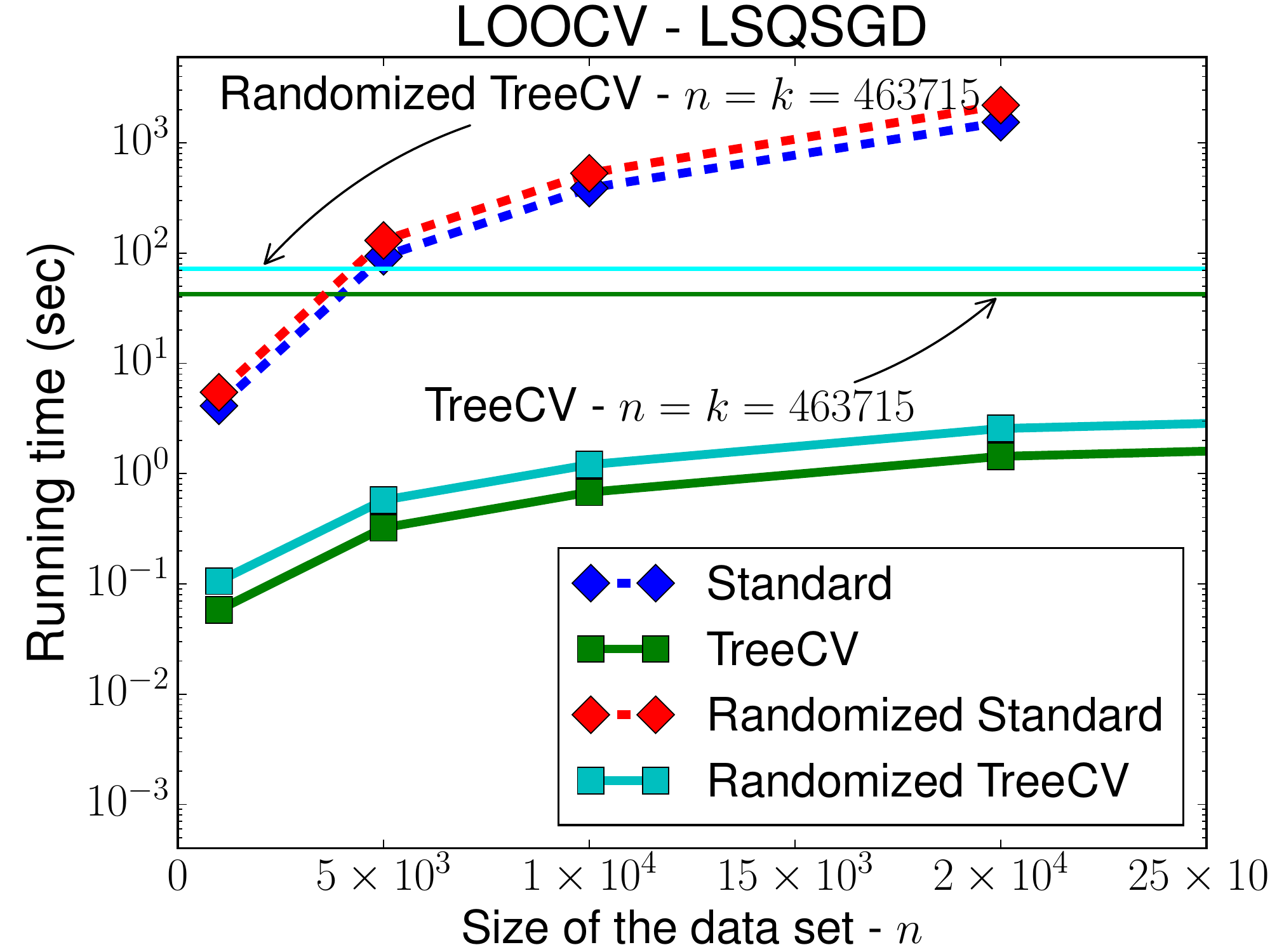} 
\end{tabular}
 \caption{Running time of $\cvAlg$ and standard $k$-CV for different  values of $k$ as a function of the number of data points $n$, averaged over $100$ independent repetitions. Top row: PEGASOS; bottom row: least-square SGD. Left column:
$k$-CV without permutations; middle column: $k$-CV with data
permutation; right column: LOOCV with and without permutations.
\label{fig:exp}
}
\end{figure*}

In this section we evaluate $\cvAlg$ and compare it with the standard ($k$-repetition) CV calculation.
We consider two incremental algorithms: linear
PEGASOS~\cite{shalev-shwartz2011pegasosprimalestimated} for SVM
classification, and least-square stochastic gradient descent (LSQSGD)
for linear least-squares regression (more precisely, LSQSGD is the
robust stochastic approximation algorithm of Nemirovski \etal\
\shortcite{nemirovski2009robust} for the squared loss and parameter vectors constrained in the unit $l_2$-ball). 
Following the suggestions in the original papers, we take the last hypothesis from PEGASOS and the average hypothesis from LSQSGD as our model.
We focus on the large-data
regime in which the algorithms learn from the data in a single pass.  

The algorithms were implemented in Python/Cython and Numpy.  The tests
were run on a single core of a computer with an Intel Xeon E5430
processor and 20 GB of RAM. We used datasets from the UCI
repository \cite{Lichman:2013}, downloaded from the LibSVM
website \cite{libSVM}.

We tested PEGASOS on the UCI Covertype dataset ($581{,}012$ data
points, $54$ features, $7$ classes), learning class ``1'' against the
rest of the classes.  The features were scaled to have unit
variance. The regularization parameter was set to $\lambda=10^{-6}$
following the suggestion of
Shalev-Shwartz \etal\ \shortcite{shalev-shwartz2011pegasosprimalestimated}. For LSQSGD, we used the
UCI YearPredictionMSD dataset ($463{,}715$ data points, $90$
features) and, following the suggestion of Nemirovski \etal\ \shortcite{nemirovski2009robust}, set the step-size to $\alpha=n^{-1/2}$. The target values where scaled to $[0,1]$. 

Naturally, PEGASOS and LSQSGD are sensitive to the order in which data points are
provided (although they are incrementally stable as mentioned after
Theorem~\ref{thm:excess-risk-stable}). 
In a vanilla implementation, the order of the data points is fixed in advance for the whole CV computation. That is, there is a fixed ordering of the chunks and of the samples within each chunk, and if we need to train a model with chunks $Z_{i_1},\ldots,Z_{i_j}$, the data points are given to the training algorithm according to this hierarchical ordering. 
This introduces certain dependence in the CV estimation procedure: for example, the model trained on chunks $Z_1,\ldots,Z_{k-1}$ has visited the data in a very similar order to the one trained on $Z_1,\ldots,Z_{k-2},Z_k$ (except for the last $n/k$ steps of the training). To eliminate this dependence, we also implemented a randomized version in which the samples used in a training phase are provided in a random order (that is, we take all the data points for the chunks $Z_{i_1},\ldots,Z_{i_j}$ to be used, and feed them to the training algorithm in a random order).

Table~\ref{tab:cvestimate} shows the values of the CV estimates computed under different scenarios. It can be observed that the standard ($k$-repetition) CV method is quite sensitive to the order of the points: the variance of the estimate does not really decay as the number of folds $k$ increases, while we see the expected decay for the randomized version. On the other hand, the non-randomized version of $\cvAlg$ does not show such a behavior, as the 
automatic re-permutation that happens during $\cvAlg$ might have made the $k$ folds less correlated. However, randomizing the order of the training points typically reduces the variance of the $\cvAlg$-estimate, as well.

\begin{table}[t]
\centering
\begin{adjustbox}{width=0.48\textwidth} 
\begin{tabular}{ccccc}
      \toprule
      \multicolumn{5}{c}{CV estimates for PEGASOS (misclassification rate $\times 100$)} \\
\hline  
& \multicolumn{2}{c}{$\cvAlg$} &  \multicolumn{2}{c}{Standard} \\
\hline
& fixed & randomized & fixed & randomized \\
\hline
$k=5$ &  $30.682 \pm 1.2127$ &  $30.839 \pm 0.9899$ &  $30.825 \pm 1.9248$ & $30.768 \pm 1.1243$ \\                                                             
$k=10$ &  $30.665 \pm 0.8299$ &  $30.554 \pm 0.7125$ &  $30.767 \pm 1.7754$ & $30.541 \pm 0.7993$ \\                                                            
$k=100$ &  $30.677 \pm 0.3040$ &  $30.634 \pm 0.2104$ &  $30.636 \pm 2.0019$ & $30.624 \pm 0.2337$ \\                                                           
$k = n$ &  $30.640 \pm 0.0564$ &  $30.637 \pm 0.0592$ &  N/A & N/A \\
\bottomrule
\end{tabular}
\end{adjustbox}
\bigskip \\
\begin{adjustbox}{width=0.48\textwidth} 
\begin{tabular}{ccccc}
      \toprule
      \multicolumn{5}{c}{CV estimates for LSQSGD (squared error $\times 100$)} \\
\hline  
 & \multicolumn{2}{c}{$\cvAlg$} &  \multicolumn{2}{c}{Standard} \\
\hline
& fixed & randomized & fixed & randomized \\
\hline
$k=5$ &  $25.299 \pm 0.0019$ &  $25.298 \pm 0.0018$ &  $25.299 \pm 0.0019$ &  $25.299 \pm 0.0017$ \\
$k=10$ &  $25.297 \pm 0.0016$ &  $25.297 \pm 0.0015$ &  $25.297 \pm 0.0016$ &  $25.297 \pm 0.0016$ \\
$k=100$ &  $25.296 \pm 0.0012$ &  $25.296 \pm 0.0013$ &  $25.296 \pm 0.0011$ &  $25.296 \pm 0.0013$ \\
$k = n$ &  $25.296 \pm 0.0012$ &  $25.296 \pm 0.0012$ &  N/A & N/A \\
\bottomrule
\end{tabular}
\end{adjustbox}
\caption{$k$-CV performance estimates averaged over $100$ repetitions (and their standard deviations), for the full datasets with and without data repermutation:  PEGASOS %
(top) and LSQSGD %
(bottom).
\label{tab:cvestimate}
}
\end{table}

Figure~\ref{fig:exp} shows the running times
of $\cvAlg$ and the standard CV method, as a function of $n$, for
PEGASOS (top row) and LSQSGD (bottom row). The first two columns show the running times
for different values of $k$, with and without randomizing the order of the data points (middle and left column, resp.), while the rightmost column shows the the running time
(log-scale) for LOOCV calculations. $\cvAlg$ outperforms the standard
method in all of the cases. It is notable that $\cvAlg$ makes the calculation of
LOOCV practical even for $n=581{,}012$, in a fraction of the time required by the standard method at $n=10{,}000$:
for example, for PEGASOS, TreeCV takes around $20$ seconds ($46$ when randomized) for computing LOOCV at $n=581{,}012$, while the standard method takes around $124$ seconds ($175$ when randomized) at $n=10{,}000$.
Furthermore, one can see that the variance reduction achieved by randomizing the data points comes at the price of a constant factor bigger running time (the factor is around $1.5$ for the standard method, and $2$  for $\cvAlg$). This comes from the fact that both the training time and the time of generating a random perturbation is linear in the number of points (assuming generating a random number uniformly from $\{1,\ldots,n\}$ can be done in constant time).

\section{Conclusion} %
\label{sec:concl-future-work}
We presented a general method, \cvAlg, to speed up cross-validation
for incremental learning algorithms. The method is applicable to a
wide range of supervised and unsupervised learning settings. We showed
that, under mild conditions on the incremental learning algorithm being
used, \cvAlg computes an accurate approximation of the $k$-CV estimate, and its running time scales logarithmically
in $k$ (the number of CV folds), while the running time of the standard method of training
$k$ separate models scales linearly with $k$.

Experiments on classification and regression, using two well-known incremental learning algorithms, PEGASOS and least-square SGD, confirmed the
speedup and predicted accuracy. When the model learned by the learning algorithm
depends on whether the data is provided incrementally or in batch (or on the order of the data, as in the case of online algorithms), the
CV estimate calculated by our method was still close to the CV
computed by the standard method, but with a lower variance.

\section*{Acknowledgments}

This work was supported by the Alberta Innovates Technology Futures
and NSERC.

{\small\bibliography{tree-cv} \bibliographystyle{named}}

\end{document}